\documentclass[11pt,letterpaper]{article}
\usepackage[margin=1in]{geometry}
\usepackage[T1]{fontenc}
\usepackage[utf8]{inputenc}
\usepackage{lmodern,microtype}
\usepackage{amsmath,amssymb,amsthm,booktabs,graphicx}
\usepackage[round,authoryear]{natbib}
\usepackage{xcolor}
\usepackage[colorlinks=true,linkcolor=blue!45!black,citecolor=blue!45!black,urlcolor=blue!45!black]{hyperref}
\graphicspath{{figures/}}
\newtheorem{theorem}{Theorem}

\newcommand{\E}{\mathbb{E}}
\newcommand{\Prb}{\mathbb{P}}
\newcommand{\old}{\theta_0}
\newcommand{\new}{\theta_1}
\newcommand{\eps}{\varepsilon}
\newcommand{\calD}{\mathcal{D}}

\newcommand{\NoharmRate}{99.7}
\newcommand{\IutRate}{2.6}
\newcommand{\ReuseRate}{97.5}
\newcommand{\CorrectedRate}{3.0}
\newcommand{\BlockGain}{0.0482}
\newcommand{\ScalarGain}{0.0127}
\newcommand{\BlockRatio}{3.79}
\newcommand{\CalTrials}{20000}
\newcommand{\SearchTrials}{4000}
\newcommand{\VarianceRatio}{200.2}
\newcommand{\ZeroHarmUpper}{0.092}

\title{Certifying Model Upgrades with Slice-Wise\\Non-Regression and Incumbent Fallback}
\author{Shengwei Zhang\textsuperscript{1} \quad Tao Wu\textsuperscript{2} \quad Fei Qian\textsuperscript{2}\\
\textsuperscript{1}University of Pennsylvania\\
\textsuperscript{2}Alibaba International Digital Commerce}
\date{August 28, 2026}
\hypersetup{pdfauthor={Shengwei Zhang, Tao Wu, Fei Qian},pdftitle={Certifying Model Upgrades with Slice-Wise Non-Regression and Incumbent Fallback}}

\begin{document}
\maketitle
\begin{abstract}
An updated model can improve an aggregate metric while degrading a slice that matters to a downstream user. We study checkpoint selection subject to non-regression tolerances relative to a retained incumbent. The central distinction is between failing to detect harm and certifying non-inferiority: the former can release harmful updates with high probability when evaluation is noisy. We give a reproducible release procedure that separates candidate search from independent, paired evaluation and returns the exact incumbent when certification fails. Applying established intersection--union and Learn-then-Test principles, we state finite-sample guarantees for one frozen candidate, a finite candidate library, and a prespecified testing order. A joint release decision does not require a slice-count Bonferroni penalty, although certification power can still decrease with the number of slices. In bounded-score simulations, a no-detected-harm gate releases a harmful candidate in \NoharmRate\% of trials in one 32-slice setting, compared with \IutRate\% for an exact non-inferiority gate at a 5\% target. A constructed two-block family yields larger certified utility than a scalar path under matched candidate counts. Public digits experiments, including a subsequent continuation that improves average aggregate accuracy, return the incumbent in every run because certification is underpowered. These results establish an auditable protocol and its limitations; they do not establish benefits on foundation-model or multilingual translation upgrades.
\end{abstract}

\section{Introduction}
Model quality is often summarized by an average over tasks, classes, languages, or customer populations. An average increase permits a decrease on any one of these slices. This matters when a downstream application depends on a particular capability of the deployed model, rather than on the aggregate used to choose its successor. Version compatibility is already an established problem: \citet{echterhoff2024muscle}, for example, document instance regressions when language-model bases and their downstream adapters are updated. Slice-level expectations capture a different requirement, one that can coexist with instance-level compatibility metrics.

We consider an incumbent model $\old$, an updated checkpoint $\new$, and a set of predefined slices. The objective is to choose a useful update while bounding the probability of releasing a model whose expected score falls more than a specified tolerance on any slice. A retained incumbent makes abstention operational: when the evidence is insufficient, the procedure can return the exact existing model. Weight interpolation supplies one possible family of intermediate candidates, and block-specific interpolation can enlarge that family. Neither the incumbent anchor nor interpolation is new; their usefulness depends on how candidate selection and statistical certification are combined.

Two errors can obscure this decision. First, a rule that releases a model whenever a regression is not statistically significant answers a different question from a non-inferiority test. More uncertainty makes the former rule easier to pass. Second, a confidence statement for one prespecified model can fail after searching over models on the same evaluation data. A final sweep over every slice does not repair this candidate-selection problem if the sweep itself is repeatedly reused.

We address these issues through an explicit interface between development and release. All candidate generation, layer search, tolerance setting, and prioritization use development data. Certification then evaluates a frozen candidate or library on independent paired observations. A failure ends the attempt and invokes the incumbent fallback. This interface can also certify candidates produced by continued training, replay, distillation, or multi-objective merging; these approaches are compatible with the release protocol.

Our contribution is an operational formulation and a reproducible examination of this interface. We (i) distinguish release safety from harm detection, including their different multiplicity requirements; (ii) specialize existing statistical tools to paired, incumbent-relative constraints with explicit fallback and search assumptions; and (iii) run controlled simulations and a small public-data integration study that report both accepted utility and failed certification. The statistical proofs apply classical concentration, intersection--union testing, and Learn-then-Test reasoning. We do not claim a new general testing principle, a new merging operator, or empirical superiority on large models.

\section{Problem and scope}
\label{sec:problem}
Let $\mathcal{S}=\{1,\ldots,K\}$ denote slices fixed before certification. Slice $s$ has distribution $P_s$ and a measurable, higher-is-better score $h_s(\theta,Z)$. Define
\begin{equation}
q_s(\theta)=\E_{Z\sim P_s}[h_s(\theta,Z)],\qquad
D_s(\theta)=q_s(\theta)-q_s(\old).
\end{equation}
The tolerances $\eps_s\geq0$ are selected before viewing certification results. The population feasible set is
\begin{equation}
\mathcal F=\{\theta:D_s(\theta)\geq-\eps_s\text{ for every }s\}.
\label{eq:feasible}
\end{equation}
For interpretable, predeclared weights $w_s\geq0$, $\sum_s w_s=1$, a possible utility is $U(\theta)=\sum_s w_sD_s(\theta)$. Different slice metrics are permitted in the constraints, but their aggregate needs an explicit utility conversion. Our experiments use a common score scale.

We seek a release rule $\widehat\theta=\mathcal A(\calD_{\mathrm{dev}},\calD_{\mathrm{cert}})$ such that
\begin{equation}
\Prb\left\{\exists s:D_s(\widehat\theta)<-\eps_s\right\}\leq\alpha.
\label{eq:guarantee}
\end{equation}
The probability is over the data and any declared algorithmic randomization. This is an unconditional repeated-sampling guarantee for the release procedure, not a posterior probability for the observed model, and not a bound conditional on an upgrade being released. Its scope is the specified scores, slices, and evaluation distributions. It does not guarantee individual predictions, undiscovered slices, or future distribution shifts.

\paragraph{Paired evaluation and the anchor.}
On certification examples $Z_{is}$, compare models using
\begin{equation}
X_{is}(\theta)=h_s(\theta,Z_{is})-h_s(\old,Z_{is}),\qquad
\widehat D_s(\theta)=\frac1{n_s}\sum_{i=1}^{n_s}X_{is}(\theta).
\label{eq:paired}
\end{equation}
At $\theta=\old$, $X_{is}(\old)=0$ by identity, so the incumbent is known to satisfy \eqref{eq:feasible}; it is not estimated to be safe from a noisy self-comparison. For stochastic generation, reuse the incumbent artifact and its evaluation outputs, or otherwise make the identity comparison explicit. An independently resampled generation from the same weights need not have zero realized score difference.

\paragraph{Assumptions for finite-sample certification.}
Conditioning on development data, each candidate and score function is fixed; certification examples within each slice are independent and identically distributed from $P_s$; and each paired difference lies in a known interval $[a_s,b_s]$ of width $W_s=b_s-a_s$. Dependence across slices is allowed. For $h_s\in[0,1]$, the generic difference range is $[-1,1]$, so $W_s=2$. Tighter structural bounds may be used if justified before seeing certification data. An observed minimum and maximum are not automatically valid population bounds.

\paragraph{Checkpoint compatibility.}
A block interpolation family takes the form
\begin{equation}
\theta_b(\lambda)=(1-\lambda_b)\theta_{0,b}+\lambda_b\theta_{1,b},
\qquad \lambda\in[0,1]^B.
\label{eq:merge}
\end{equation}
It requires compatible parameter names, shapes, and semantics. Tokenizers and token indices must also agree when they determine embedding meaning. Equal architecture alone does not establish alignment; permutation symmetries can obstruct averaging \citep{ainsworth2023rebasin}. Common checkpoint ancestry is a practical starting point, not a guarantee that every interpolated model performs well. No certification proof below assumes a smooth or monotone interpolation path.

\section{From candidate search to a defensible release}
\label{sec:release}
\subsection{Three decisions that must be distinguished}
For a scalar comparison with standard error $\sigma$ and a positive normal quantile $z$, the following decisions differ:
\begin{align}
\text{Point estimate: }&\widehat D\geq-\eps;\label{eq:point}\\
\text{No detected harm: }&\widehat D+z\sigma\geq-\eps;\label{eq:noharm}\\
\text{Non-inferiority certificate: }&\widehat D-z\sigma\geq-\eps.\label{eq:ni}
\end{align}
Equation~\eqref{eq:noharm} rules out updates only when an upper confidence bound lies below the tolerance. It can control false alarms about safe models. Equation~\eqref{eq:ni} requires a lower bound to clear the tolerance and can control falsely certifying unsafe models. Failure to reject inferiority does not establish non-inferiority.

For illustration, assume an exactly normal estimator with $D=-0.20$, $\eps=0.05$, $\sigma=0.10$, and $z=\Phi^{-1}(0.95)$. The no-detected-harm rule releases with probability about $0.558$, while the non-inferiority rule releases with probability about $0.00083$. This is a calculation in the specified normal model, not an assumption about arbitrary evaluation metrics. Increasing the quantile to obtain simultaneous harm detection makes \eqref{eq:noharm} easier to pass, which can increase release risk.

\subsection{One frozen candidate: an intersection--union test}
For each slice, test the null $H_s:D_s(\theta)\leq-\eps_s$ with a valid one-sided $p$-value $p_s$. Validity means $\Prb(p_s\leq t)\leq t$ under $H_s$, for every $t\in[0,1]$. The unsafe null for the whole candidate is a union, $H=\bigcup_s H_s$. Accordingly, define
\begin{equation}
p(\theta)=\max_s p_s(\theta),\qquad
\text{certify }\theta\text{ only if }p(\theta)\leq\alpha.
\label{eq:iut}
\end{equation}
The use of a closed null at the tolerance boundary is conservative for \eqref{eq:guarantee}, which counts strict violations.

\begin{theorem}[Frozen-candidate release; standard IUT application]
\label{thm:one}
Suppose development data select one candidate independently of the certification observations. If the slice $p$-values are conditionally valid given development data, then releasing the candidate only when \eqref{eq:iut} holds, and otherwise returning $\old$, satisfies \eqref{eq:guarantee}. No independence across slices and no division of $\alpha$ by $K$ are required.
\end{theorem}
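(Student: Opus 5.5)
The argument conditions on the development data and then reduces to the standard intersection--union bound. Let $\mathcal{G}$ denote the $\sigma$-field generated by $\calD_{\mathrm{dev}}$ and any declared randomization used in the search. Given $\mathcal{G}$, the selected candidate $\theta^\ast$ is fixed. Since $\theta^\ast$ is chosen independently of the certification observations, the conditional law of $\calD_{\mathrm{cert}}$ given $\mathcal{G}$ is still the i.i.d.\ paired law of Section~\ref{sec:problem}, so the assumed conditional validity of each $p_s(\theta^\ast)$ applies at this fixed candidate. The released model is $\widehat\theta=\theta^\ast$ when $p(\theta^\ast)\le\alpha$ and $\widehat\theta=\old$ otherwise.

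First I dispose of the fallback branch. Whenever $\widehat\theta=\old$, we have $D_s(\old)=0\ge-\eps_s$ for every $s$, because the paired differences $X_{is}(\old)$ vanish identically and $\eps_s\ge0$. This branch therefore never contributes to the event in \eqref{eq:guarantee}. The event $\{\exists s: D_s(\widehat\theta)<-\eps_s\}$ is then contained in $\{\theta^\ast\notin\mathcal F\}\cap\{p(\theta^\ast)\le\alpha\}$.

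Next I fix $\mathcal{G}$ and split into cases according to whether $\theta^\ast$ is feasible; this is a deterministic fact given $\mathcal{G}$.
\begin{itemize}
\item If $\theta^\ast\in\mathcal F$, the conditional violation probability is $0$.
\item If $\theta^\ast\notin\mathcal F$, choose some slice $s^\ast=s^\ast(\mathcal{G})$ with $D_{s^\ast}(\theta^\ast)<-\eps_{s^\ast}$. This slice is $\mathcal{G}$-measurable, for instance the smallest such index. Then $H_{s^\ast}$ holds, because the strict violation lies inside the closed null. Since $p(\theta^\ast)=\max_s p_s(\theta^\ast)\ge p_{s^\ast}(\theta^\ast)$, we get
\[
\Prb\{p(\theta^\ast)\le\alpha\mid\mathcal{G}\}\le\Prb\{p_{s^\ast}(\theta^\ast)\le\alpha\mid\mathcal{G}\}\le\alpha
\]
by conditional validity of $p_{s^\ast}$.
\end{itemize}
Taking expectations over $\mathcal{G}$ gives \eqref{eq:guarantee}.

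The remaining claims follow directly from this argument. The bound used a single $p$-value and never a union over slices, so no Bonferroni factor $K$ appears. It also used only the marginal validity of $p_{s^\ast}$, so the joint dependence across slices is irrelevant.

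The main obstacle is making sure $s^\ast$ does not depend on the certification data. If the violated slice were chosen after seeing the certification $p$-values, the conditional validity bound would not apply. Defining $s^\ast$ from $\mathcal{G}$ and the population quantities $D_s(\theta^\ast)$ alone avoids this.
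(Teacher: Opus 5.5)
Your proposal is correct and follows the paper's proof. Both condition on the development data, fix a violating slice $s^*$ of an unsafe selected candidate, and bound the release event by $\{p_{s^*}\le\alpha\}$ using conditional validity. Both then note that a feasible candidate or the exact incumbent cannot cause a violation, and integrate over the development data; your explicit $\mathcal{G}$-measurable choice of $s^*$ simply makes precise a step the paper leaves implicit.
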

\begin{proof}
Condition on development data. If the selected candidate is strictly unsafe, fix one violating slice $s^*$. Its null is true, and the event of release is contained in $\{p_{s^*}\leq\alpha\}$, whose probability is at most $\alpha$. A feasible candidate or the exact incumbent cannot cause the event in \eqref{eq:guarantee}. Integrate over development data.
\end{proof}

This familiar intersection--union distinction also appears in multi-endpoint testing: establishing a conjunction of endpoint requirements differs from selecting one successful endpoint \citep{fda2022}. Theorem~\ref{thm:one} concerns a joint decision. It does not assert simultaneous numerical coverage of every marginal confidence bound. If an application needs all $K$ intervals to cover jointly, an appropriate simultaneous procedure is still needed.

\paragraph{A bounded-score implementation.}
Hoeffding's inequality \citep{hoeffding1963} gives
\begin{equation}
L_s(\eta)=\widehat D_s-W_s\sqrt{\frac{\log(1/\eta)}{2n_s}}.
\label{eq:hoeffding}
\end{equation}
Testing all $L_s(\alpha)\geq-\eps_s$ implements Theorem~\ref{thm:one}. For general scores in $[0,1]$, the radius is $\sqrt{2\log(1/\alpha)/n_s}$. Exact tests for special score distributions, or other valid bounds, can replace Hoeffding. A plug-in normal or bootstrap interval is not automatically a finite-sample certificate.

\subsection{Several candidates: multiplicity belongs to the search}
Let development data freeze $J$ candidates $\theta^{(1)},\ldots,\theta^{(J)}$. The same certification observations may evaluate all of them; resulting $p$-values can be dependent. Testing each at level $\alpha$ and selecting any passing candidate does not generally preserve \eqref{eq:guarantee}.

\begin{theorem}[Finite library and fixed order; LTT applications]
\label{thm:many}
Under the conditional validity assumption of Theorem~\ref{thm:one}, either procedure below satisfies \eqref{eq:guarantee}.
\begin{enumerate}
\item Assign nonnegative levels $\eta_j$ using development data, with $\sum_j\eta_j\leq\alpha$. Certify candidate $j$ when $p(\theta^{(j)})\leq\eta_j$; select any certified candidate or return $\old$.
\item Freeze an order of candidates using development data. Test in that order at level $\alpha$, stop at the first failure, and select among candidates certified before that failure, or return $\old$.
\end{enumerate}
\end{theorem}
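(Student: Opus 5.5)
Both parts reduce to the single-candidate argument of Theorem~\ref{thm:one}, applied after conditioning on development data. Conditioning freezes the candidates $\theta^{(1)},\ldots,\theta^{(J)}$, the levels $\eta_j$, and the order. Let $\mathcal U\subseteq\{1,\ldots,J\}$ denote the (now deterministic) set of strictly unsafe candidates. For each $j\in\mathcal U$, fix one violating slice $s_j$, so $D_{s_j}(\theta^{(j)})<-\eps_{s_j}$ and $H_{s_j}$ holds for $\theta^{(j)}$. Since $p(\theta^{(j)})=\max_s p_s(\theta^{(j)})\geq p_{s_j}(\theta^{(j)})$, conditional validity gives $\Prb\{p(\theta^{(j)})\leq t\mid\calD_{\mathrm{dev}}\}\leq t$ for every $t$. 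The exact incumbent and feasible candidates cannot produce the event in \eqref{eq:guarantee}. So in both parts the violation event is contained in the event that some $j\in\mathcal U$ is certified. The last step in each part is to integrate over development data.

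For part~1, the release of an unsafe model requires that some $j\in\mathcal U$ satisfy $p(\theta^{(j)})\leq\eta_j$, whichever certified candidate the rule selects. A union bound gives probability at most $\sum_{j\in\mathcal U}\eta_j\leq\sum_j\eta_j\leq\alpha$ conditionally. The union bound needs no independence among the $p$-values, which matters because they share certification observations. The selection among certified candidates may even depend on certification data; this is harmless because the bound covers the event that any unsafe candidate is certified.

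For part~2, let $j^*=\min\mathcal U$ be the first strictly unsafe candidate in the frozen order; if $\mathcal U$ is empty there is nothing to prove. Fixed-sequence testing stops at the first failure. Hence any unsafe candidate in the certified prefix forces candidate $j^*$ itself to have passed, since every candidate up to the last certified one passed. The violation event is therefore contained in $\{p(\theta^{(j^*)})\leq\alpha\}$, which has conditional probability at most $\alpha$. This is the Learn-then-Test fixed-sequence argument: only one unsafe hypothesis, the first, ever has to be charged.

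The main obstacle is rigor about what may depend on which data. The order and the levels must be measurable with respect to development data alone, so that $j^*$ and $\mathcal U$ are fixed before certification. I would state that explicitly, so that the conditional validity of $p_{s_{j^*}}(\theta^{(j^*)})$ applies to a candidate that is not itself chosen by certification data.
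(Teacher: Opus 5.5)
Your proof is correct and matches the paper's argument: after conditioning on development data, you use a union bound over unsafe candidates for the allocated-level procedure, and for fixed-sequence testing you note that any unsafe release forces the first unsafe candidate in the frozen order to pass, which has probability at most $\alpha$. The only cosmetic difference is that you index the first \emph{strictly} unsafe candidate rather than the first candidate whose closed whole-candidate null holds; this targets the event in \eqref{eq:guarantee} directly and is equally valid.
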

\begin{proof}
For the first procedure, each unsafe candidate is falsely certified with probability at most $\eta_j$. A union bound over unsafe candidates gives at most $\sum_j\eta_j\leq\alpha$. For the second, condition on development data and identify the first candidate in the frozen order whose whole-candidate null is true. Certifying any candidate with a true null requires rejecting that first true null, an event of probability at most $\alpha$. If no candidate has a true null, every candidate is strictly feasible. The incumbent fallback is exact.
\end{proof}

The equal allocation is $\eta_j=\alpha/J$, with no additional $K$ factor for the joint decision. Fixed-sequence validity does not require population feasibility to be monotone in the order; a poor order can instead reduce power by causing early termination. These are instances of the Learn-then-Test framework \citep{angelopoulos2022ltt}, not new multiple-testing results.

\subsection{Development search and failure handling}
Our simplest deployment interface freezes one candidate. Development search may evaluate a scalar grid, a block grid, coordinate proposals, or a preexisting merging method. It estimates utility and anticipates the eventual certification margin, then hands a single immutable candidate to the release test. Table~\ref{tab:algorithm} states the protocol. More expensive libraries use Theorem~\ref{thm:many}.

\begin{table}[t]
\centering\small
\begin{tabular}{p{0.10\linewidth}p{0.82\linewidth}}
\toprule
Stage & Required operation \\
\midrule
1 & Retain the exact incumbent. Before certification, define slices, scores, tolerances, utility, sampling units, and error budget. \\
2 & On development data, generate and score compatible candidates. Freeze one candidate, or freeze a library and its valid testing plan. \\
3 & Evaluate incumbent and frozen candidate(s) on the same certification examples; preserve paired observations and any cross-slice clustering. \\
4 & Apply the selected whole-candidate non-inferiority test. Release only an eligible candidate. Otherwise return the incumbent and end this attempt. \\
5 & Record the selected artifact, gate, sample sizes, and failure outcome. Use separate reporting data for descriptive performance analysis. \\
\bottomrule
\end{tabular}
\caption{Candidate generation and certification are separate operations. Repair after a failed gate starts a new statistical attempt and requires new data or a predeclared valid sequential design.}
\label{tab:algorithm}
\end{table}

Active-slice screening is allowed during development. A slice initially far from its threshold can become binding after a block update, so the release gate still checks every predefined slice. If that gate fails, detecting the failure is not the same as finding another acceptable model. We explicitly return the incumbent rather than repeatedly adjusting the candidate on certification data.

For a library of $J$ checkpoints, naive development evaluation costs $J$ full slice sweeps, with additional model-loading and generation costs. A screened block search may reduce some development measurements, but its initial sweep, number of blocks, proposal rounds, and final certification still count. We do not assert that evaluation is always cheaper than continued training. Candidate generators should be compared at matched evaluation and training budgets, then passed through the same gate.

\section{What the guarantee costs and what the search can gain}
\label{sec:geometry}
\subsection{Slice count affects power without a slice-count testing penalty}
Let a frozen candidate have positive margins $\gamma_s=D_s+\eps_s>0$. For the Hoeffding implementation, denote the radius in \eqref{eq:hoeffding} at level $\alpha$ by $r_s$. If $\gamma_s>r_s$, then
\begin{equation}
\Prb\{L_s(\alpha)<-\eps_s\}\leq
\exp\left[-\frac{2n_s(\gamma_s-r_s)^2}{W_s^2}\right].
\label{eq:power}
\end{equation}
Thus, a sufficient condition for certifying the candidate with probability at least $1-\beta$ is
\begin{equation}
n_s\geq\frac{W_s^2}{2\gamma_s^2}
\left(\sqrt{\log(1/\alpha)}+\sqrt{\log(K/\beta)}\right)^2
\quad\text{for every }s.
\label{eq:samplesize}
\end{equation}
The proof is in Appendix~\ref{app:proofs}. Equation~\eqref{eq:samplesize} separates safety level $\alpha$ from joint power $1-\beta$. It does not promise acceptance uniformly in $K$. A slice close to its tolerance can require substantial data, and exact boundary certification may have low power even with a correct implementation.

Paired measurements offer a principled way to improve precision:
\begin{equation}
\operatorname{Var}(h_s(\theta,Z)-h_s(\old,Z))=
\operatorname{Var}(h_s(\theta,Z))+\operatorname{Var}(h_s(\old,Z))
-2\operatorname{Cov}(h_s(\theta,Z),h_s(\old,Z)).
\end{equation}
Positive covariance reduces the variance of the difference relative to independent evaluations. It does not justify changing the direction of a confidence bound. Range-based Hoeffding bounds do not exploit every variance reduction; variance-sensitive valid tests are a useful extension.

\subsection{Scalar paths and block freedom}
The identity $\theta(0)=\old$ ensures that the population feasible set contains the incumbent. A scalar interpolation only explores the diagonal $\lambda_1=\cdots=\lambda_B$, whereas a block search includes off-diagonal candidates. An enlarged family cannot reduce the best population utility if the old family is included, but a finite development search and certification penalty can eliminate this advantage in practice.

A scalar ``first binding threshold'' is globally optimal only under stronger geometry assumptions. If each slice's entire feasible coefficient set equals $[0,t_s]$, and utility is nondecreasing over their intersection, then the scalar optimum is $\min_s t_s$. If a slice has disconnected feasible regions, this only describes the component attached to zero. Evaluating a finite grid and selecting its best certified member requires no connectedness assumption, although it does not certify coefficients between evaluated grid points.

Our controlled geometry has eight slices and two blocks:
\begin{equation}
D_1(\lambda)=-0.08\lambda_1+0.02\lambda_2,\qquad
D_s(\lambda)=0.10\lambda_1+0.04\lambda_2\quad(s\geq2),
\label{eq:geometry}
\end{equation}
with $\eps_s=0.01$ and uniform weights. The scalar feasible endpoint is $\lambda_1=\lambda_2=1/6$, giving utility $0.01917$. The continuous block optimum is $(0.375,1)$, giving $0.06656$. This explicitly constructed separation motivates a test of whether the search advantage survives finite-data certification. It is not evidence that fragile capabilities in natural networks are localized to particular layers.

\section{Experiments}
\label{sec:experiments}
\subsection{Design, provenance, and reporting}
All numerical results in this section were executed locally from the accompanying code. The synthetic studies use explicit bounded paired-score distributions, seed 20260911, independent random streams, \CalTrials\ independent trials for gate studies, and \SearchTrials\ trials per candidate-search configuration. Counts sampled directly from binomial distributions are exact sufficient statistics for the described observations, not a Gaussian approximation. Saved artifacts include these counts, trial-level decisions, selected coefficients, CSV summaries, generated figures, and a source-hash manifest. Rates have exact 95\% Clopper--Pearson intervals; utility intervals quantify Monte Carlo uncertainty. Main figures show representative settings; full sample-size and correlation results are retained in machine-readable tables.

The primary outcomes are the probability of a harmful release, probability of a non-incumbent release, and population utility of the deployed model. Synthetic population expectations are known by construction. The retained-gain ratio is reported only in the geometry experiment, whose candidate gain is exactly positive ($0.115$). Real-data reporting uses absolute accuracy differences and does not divide by a negative candidate gain.

\subsection{Harm detection and certification have different operating behavior}
Let $X_s\in\{-0.2,+0.2\}$ with mean $D_s$, $\eps_s=0.02$, and $n\in\{128,512,2048\}$ observations per slice. In the harmful scenario, one slice has $D_1=-0.022$ and the others have $D_s=0.08$. In the safe scenario, every slice has $D_s=0.01$. We vary $K\in\{1,8,32\}$.

The exact endpoint non-inferiority test uses $C_s\sim\operatorname{Binomial}(n,p_s)$, $p_{0}=(0.2-0.02)/0.4=0.45$, and the upper-tail $p$-value $\Prb_{p_0}(C\geq C_s)$. Exact IUT requires all these $p$-values to be at most $\alpha$. The simultaneous exact comparator uses $\alpha/K$; it represents a more demanding marginal familywise construction. The no-detected-harm gate instead rejects a model when at least one lower-tail harm test is significant at $\alpha/K$. We also evaluate the point-estimate and bounded Hoeffding IUT gates.

\begin{figure}[t]
\centering
\includegraphics[width=\linewidth]{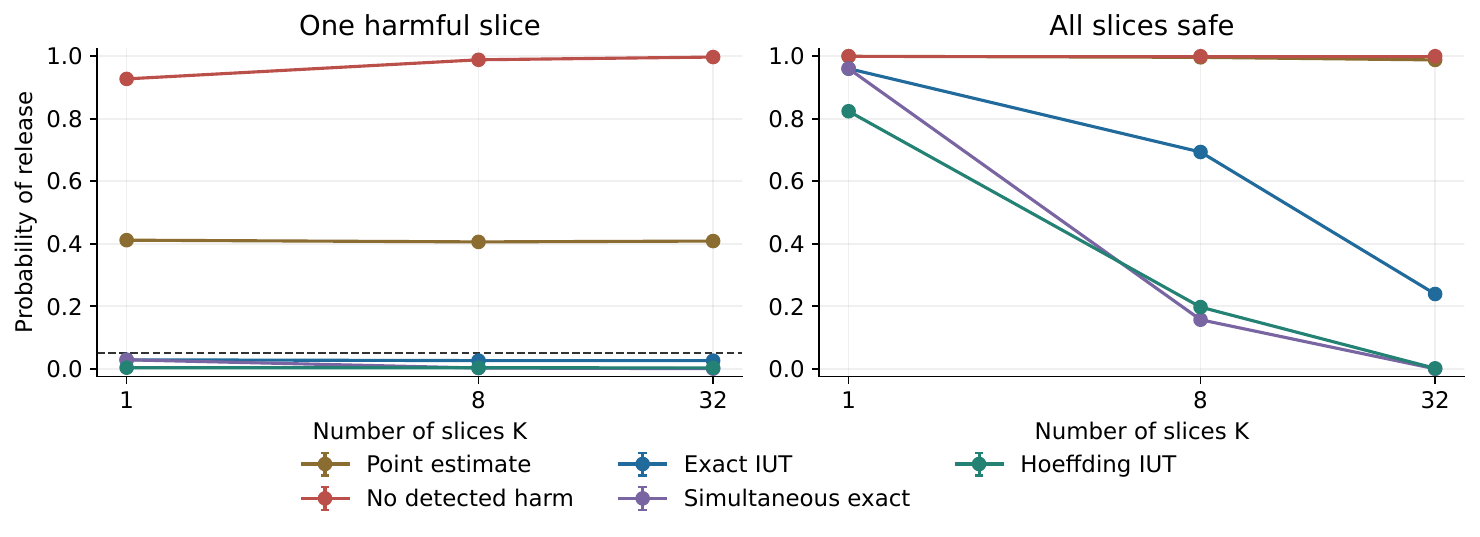}
\caption{Release probability at $n=512$ examples per slice. Left: release is an error because one slice is beyond tolerance. Right: every slice is safe, so release measures power. Error bars are exact 95\% Monte Carlo intervals, often smaller than the markers. The dashed line marks the 5\% false-release target. A simultaneous harm-detection screen increasingly accepts the harmful candidate as $K$ grows; this behavior is not a non-inferiority guarantee.}
\label{fig:calibration}
\end{figure}

Table~\ref{tab:calibration} and Figure~\ref{fig:calibration} show the difference. At $K=32,n=512$, the no-detected-harm rule releases the harmful candidate in \NoharmRate\% of trials, while exact IUT releases it in \IutRate\%. The latter is below the 5\% target, with discreteness and the nonzero violation making it conservative. The point-estimate rule has no comparable protection. The simultaneous exact and Hoeffding gates are more conservative in this family. In the safe scenario, requiring all slices to pass reduces power as $K$ increases; Theorem~\ref{thm:one} removes a testing penalty, not this conjunction effect.

\begin{table}[t]
\centering\small
\begin{tabular}{lrrr}
\toprule
Rule & $K=1$ & $K=8$ & $K=32$ \\
\midrule
Point estimate & 41.16 & 40.60 & 40.87 \\
No detected harm & 92.74 & 98.86 & 99.73 \\
Exact IUT & 2.89 & 2.63 & 2.61 \\
Simultaneous exact & 2.89 & 0.26 & 0.05 \\
Hoeffding IUT & 0.36 & 0.33 & 0.32 \\
\bottomrule
\end{tabular}

\caption{Harmful-release rates (\%) over \CalTrials\ trials at $n=512$. The true violation is 0.002 beyond the tolerance. All methods see identical draws in each configuration.}
\label{tab:calibration}
\end{table}

We also repeat the 32-slice study with 0\%, 50\%, or 100\% of example positions sharing a common latent uniform variable across slices. Thresholding this uniform at each slice's success probability preserves each binomial marginal while introducing dependence. The exact IUT harmful-release frequency remains below 5\% for each tested dependence fraction. The theorem, rather than this finite sweep, establishes validity under arbitrary cross-slice dependence satisfying the within-slice assumptions.

\subsection{Searching certification results inflates false release}
We freeze $J\in\{1,5,20,100\}$ synthetic candidates, each with a score mean $D=-0.0205$, tolerance 0.02, and $n=2048$ observations. Candidate-specific score observations are independent in this stress test. Every non-incumbent release is therefore harmful. We compare selecting any candidate that passes an uncorrected level-0.05 test, candidate Bonferroni correction, a frozen fixed sequence, and choosing one candidate on an independent development split before certification.

\begin{figure}[t]
\centering
\includegraphics[width=0.83\linewidth]{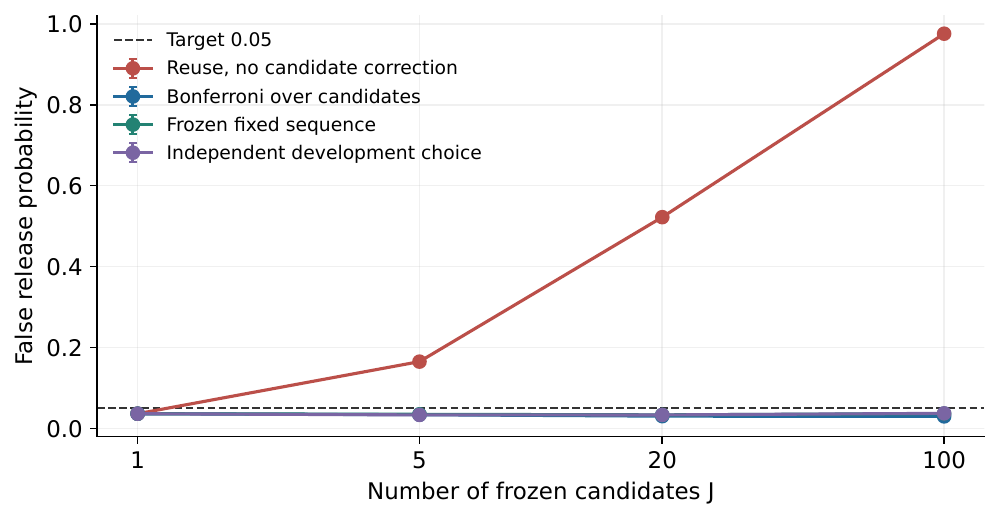}
\caption{False-release probability after searching a frozen library of harmful candidates. At $J=100$, uncorrected reuse releases harm in \ReuseRate\% of trials; candidate correction gives \CorrectedRate\%. Fixed-sequence testing and independent development choice preserve their designated error target. Candidate independence is a simulation setting, not a requirement of the validity proofs.}
\label{fig:adaptive}
\end{figure}

Figure~\ref{fig:adaptive} shows that an individually valid test does not remain valid when any success in a large search can trigger deployment. At $J=100$, the uncorrected rule reaches \ReuseRate\% harmful releases, compared with \CorrectedRate\% using $\alpha/J$. The all-harmful library makes the fixed-sequence interpretation particularly transparent: any false certification requires the first candidate to pass. Independent development selection permits searching the development data freely, while the subsequent single test retains its validity. This study isolates statistical selection effects; it does not model the natural dependence between nearby neural checkpoints.

\subsection{Block search can retain more utility in a constructed family}
For the means in \eqref{eq:geometry}, observations are
\begin{equation}
X_{is}(\lambda)=\mu_s^\top\lambda+
0.06(\lambda_1 V_{is1}+\lambda_2 V_{is2}),
\qquad V_{isb}\overset{\mathrm{iid}}{\sim}\operatorname{Rademacher}.
\label{eq:dgp}
\end{equation}
These are valid paired differences between scores in $[0,1]$: an incumbent score in $[0.48,0.52]$ can share its base noise with the updated score. Each observed coefficient lies within $[-0.2,0.2]$, which provides the declared difference width $W(\lambda)=0.4\|\lambda\|_1$. The identity candidate has width zero.

Each trial generates independent development and certification splits of $n$ examples per slice. Search maximizes development utility among candidates satisfying a development margin that anticipates the Hoeffding certification radius and a fixed design buffer (Appendix~\ref{app:simulation}). It then freezes one candidate and certifies it once. We compare a coarse 21-point scalar grid, a 441-point scalar grid, and a $21\times21$ block grid with 441 candidates. The dense scalar and block grids match candidate counts; the coarse grid is a resolution diagnostic. The block family has a known geometric advantage by construction, and this comparison is not a general claim about equal wall-clock search costs.

\begin{figure}[t]
\centering
\includegraphics[width=\linewidth]{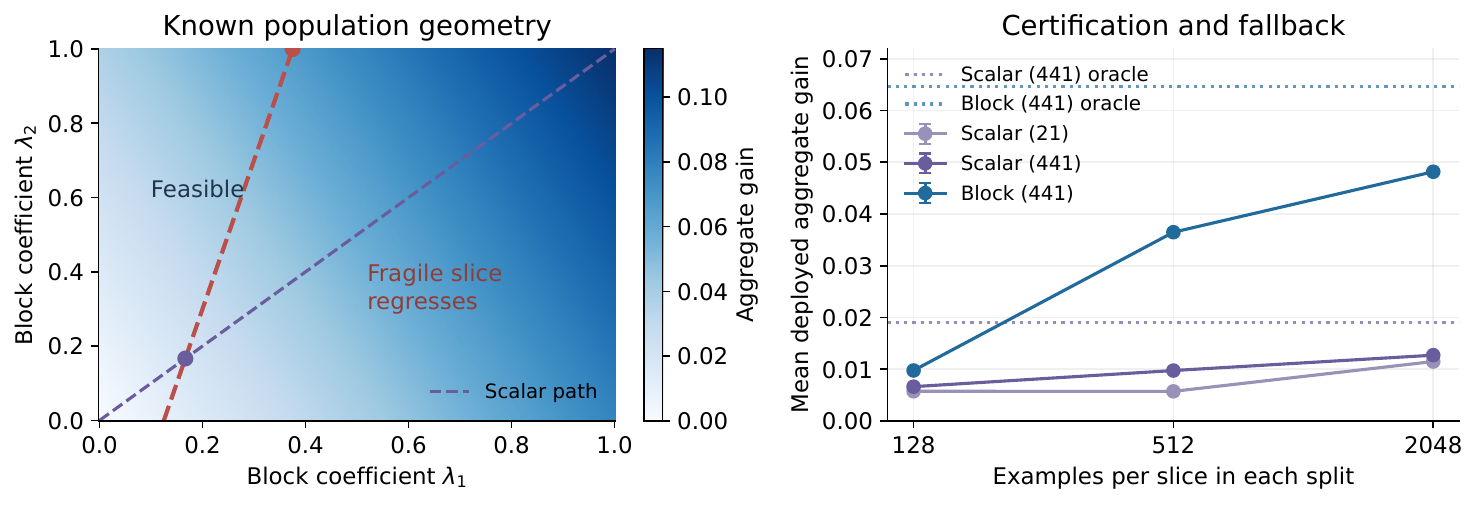}
\caption{Left: the population feasible region for the constructed two-block family, with continuous scalar and block optima marked. Right: mean deployed utility after independent certification and incumbent fallback; dotted lines give the corresponding finite-grid population optima. All search families use the same development and certification draws. Error bars show 95\% Monte Carlo mean intervals.}
\label{fig:geometry}
\end{figure}

\begin{table}[t]
\centering\small
\begin{tabular}{llrrrr}
\toprule
$n$ & Family & Gain & Retained (\%) & Fallback (\%) & Harmful releases \\
\midrule
128 & Scalar (21) & 0.0058 & 5.0 & 0.00 & 0/4000 \\
128 & Scalar (441) & 0.0066 & 5.8 & 4.83 & 0/4000 \\
128 & Block (441) & 0.0098 & 8.5 & 4.83 & 0/4000 \\
512 & Scalar (21) & 0.0057 & 5.0 & 0.22 & 0/4000 \\
512 & Scalar (441) & 0.0098 & 8.5 & 4.70 & 0/4000 \\
512 & Block (441) & 0.0365 & 31.7 & 3.52 & 0/4000 \\
2048 & Scalar (21) & 0.0115 & 10.0 & 0.00 & 0/4000 \\
2048 & Scalar (441) & 0.0127 & 11.1 & 4.45 & 0/4000 \\
2048 & Block (441) & 0.0482 & 41.9 & 2.50 & 0/4000 \\
\bottomrule
\end{tabular}

\caption{Utility after certification. Parentheses indicate candidate counts. Retained gain is relative to the known full-update gain of 0.115. Fallback is an actual output of the procedure, not an excluded trial.}
\label{tab:utility}
\end{table}

At $n=2048$, matched-count block search achieves mean utility \BlockGain, compared with \ScalarGain\ for dense scalar search, a factor of \BlockRatio\ in this constructed family. Certification failures reduce the gains actually released, and coarse scalar resolution creates additional effects at intermediate sample sizes. No harmful release occurs in any of the nine search configurations. For each configuration's \SearchTrials\ trials, zero events correspond to a two-sided 95\% upper confidence endpoint of \ZeroHarmUpper\%, not proof of zero risk. The theorem supplies the risk bound; the experiment validates the implemented behavior under this generator.

\subsection{A public digits integration test exposes the data bottleneck}
We additionally train and merge actual small neural checkpoints on the public optical digits data distributed by scikit-learn \citep{sklearn_digits,uci_digits}. Across five fixed seeds, a $64$--$48$--$10$ ReLU network is trained for 12 epochs; the update continues for 24 epochs using only classes 0--4. Both scalar and layer searches evaluate 121 development candidates. Data are split into training, development, certification, and reporting partitions before fitting. The continuation is deliberately a forgetting stress test, and its aggregate accuracy also deteriorates; it is not an aggregate-improving upgrade example.

Mean reporting accuracies are 0.9478 for the incumbent, 0.7300 for the updated checkpoint, 0.9494 for the scalar candidate, and 0.9483 for the layer candidate. The protocol freezes its proposed candidate using development data. Every proposal fails the bounded Hoeffding release gate: certification has only 35--37 observations per class and tolerance 0.03. The actual released model is the incumbent in all five runs. This result validates end-to-end fallback and illustrates severe power limits; it does not establish a useful certified upgrade, layer superiority, or a population guarantee beyond the sampling assumptions. Full configurations, descriptive seed variation, weights, split indices, and predictions are included in Appendix~\ref{app:digits} and the artifact.

After reviewing this stress test, we fixed and executed a second protocol that continues on all ten training classes. Across the same five seeds, the continued checkpoint reaches mean reporting accuracy 0.9594, compared with 0.9567 for scalar and 0.9544 for layer proposals; all frozen proposals still fail the gate. Thus a path with higher average aggregate accuracy also encounters the certification bottleneck, while merging has no mean accuracy advantage over its fully continued endpoint here. This subsequent extension reuses the original splits and is descriptive, not fresh confirmatory certification. Its unchanged 24 epochs involve more training updates than the selective path, so the comparison does not isolate class coverage at matched compute.

\section{Relation to existing work}
\paragraph{Merging and multi-objective selection.}
WiSE-FT interpolates pretrained and fine-tuned weights to improve robustness \citep{wortsman2022wise}; model soups study averaging related fine-tuned models \citep{wortsman2022soups}. We use this family of operations rather than claim a new one. MAP explicitly estimates Pareto fronts of merge coefficients, and Pareto Merging learns preference-conditioned tradeoffs \citep{li2025map,chen2025pareto}. It would therefore be incorrect to characterize all existing merging methods as optimizing only one fixed average. A relative slice constraint defines an admissible portion of a tradeoff set; our focus is how to certify a selected candidate with finite paired data. A strong empirical comparison would pass candidates from these methods through the same release interface.

\paragraph{Statistical and safe-update frameworks.}
Learn then Test provides the direct precedent for calibrating a parameterized predictor through multiple testing \citep{angelopoulos2022ltt}. Seldonian algorithms separate candidate selection from safety testing and can return no solution \citep{hoag2023seldonian}. We instantiate these ideas with an incumbent-relative difference and an exact operational fallback. SafeOpt studies optimization from safe regions under Gaussian-process regularity, including safety of queried points \citep{sui2015safeopt}; our offline evaluation points need not themselves be safe to deploy. \citet{elmecker2026safe} compute locally invariant parameter domains for safe updates, explicitly include the no-op anchor, and provide finite-sample concentration arguments. Our certificates apply to frozen evaluated candidates rather than certify a connected parameter domain. These are differences in the certified object and workflow, not a claim that prior safe-update work lacks statistical guarantees.

\paragraph{Update compatibility and evaluation.}
MUSCLE addresses LLM update compatibility through instance regressions and adapters \citep{echterhoff2024muscle}. A slice can maintain its mean while some formerly correct examples become wrong, so slice non-regression does not subsume negative-flip protection. Paired significance testing in NLP has a long history \citep{koehn2004,dror2018}; the relevant additional issue here is the use of evaluation results to make a release decision after candidate search.

\section{Limitations and conclusions}
The numerical evidence is mainly synthetic. It demonstrates that a block family can contain useful candidates and that different release gates have different error behavior under known distributions. It does not show that this geometric advantage is common, that the proposed search is computationally superior, or that language-model upgrades can be certified with realistic evaluation budgets. The public digits studies are small and yield no certified non-incumbent upgrade; the subsequent balanced path also uses previously examined holdouts and supplies descriptive evidence only.

The bounded-score theorem assumes representative independent examples within each slice. Shared documents, writers, or annotation batches can violate this assumption and require an appropriate cluster-level design. Unbounded metrics and corpus-level scores need their own statistical treatment. Repeated organizational releases consume additional error and tolerance budgets; the one-attempt guarantee here does not certify an unrestricted sequence of updates. Positive tolerances allow bounded degradation, and utility improvement requires an additional condition if it must itself be certified.

The actionable conclusion is a specific release discipline: generate candidates on development data, preserve paired incumbent comparisons, use non-inferiority in the correct direction, account for selection on certification results, and return the exact incumbent when evidence is inadequate. Low-dimensional merging can make that discipline convenient, but cannot substitute for adequate data or correct inference. The accompanying proofs and experiments make both successful selection and fallback reviewable.

\paragraph{Reproducibility statement.}
The artifact contains executable experiment scripts, independent gate tests, raw sufficient statistics, neural checkpoints, split indices, result tables, figure sources, and environment/source manifests. All numerical tables in this manuscript are generated from completed runs. No private corpus, internal deployment statistic, or unrun large-model result is included.

\paragraph{AI assistance statement.}
AI tools assisted with critique of the initial proposal, repair and checking of mathematical derivations, literature searches, experimental design and implementation, execution and inspection of local runs, and manuscript drafting. The recorded results come from executed programs. The authors are responsible for all claims, derivations, sources, code, and results, including the validation of AI-assisted content.

\begingroup
\small
\setlength{\bibsep}{4pt}
\raggedright
\bibliographystyle{plainnat}
\bibliography{references}
\endgroup

\clearpage
\appendix
\section{Proofs, boundary cases, and interpretation}
\label{app:proofs}
\subsection{Validity of the bounded paired test}
For a frozen candidate, assume $X_{is}\in[a_s,b_s]$ are iid with mean $D_s$. Hoeffding gives, for every $t>0$,
\begin{equation}
\Prb\{\widehat D_s-D_s\geq t\}\leq
\exp(-2n_st^2/W_s^2).
\end{equation}
Under $D_s\leq-\eps_s$, the event
$\widehat D_s-W_s\sqrt{\log(1/\eta)/(2n_s)}\geq-\eps_s$
implies an upward deviation of at least the radius. Its probability is at most $\eta$. Equivalently, a valid conservative $p$-value for $W_s>0$ is
\begin{equation}
p_s=\exp\left[-\frac{2n_s\{\max(\widehat D_s+\eps_s,0)\}^2}{W_s^2}\right].
\end{equation}
At the incumbent, $W_s=0$ and the difference is deterministically zero; this identity is handled separately instead of dividing by zero or constructing a random boundary test. A zero empirical variance for a distinct model does not give this identity guarantee.

Conditional validity is essential. Development data may determine model coefficients, slice weights, tolerances, or test order before certification, provided the resulting testing problem satisfies the stated assumptions conditional on that information. If certification outcomes alter those quantities, the above fixed-candidate calculation is no longer sufficient. An independent new sample or a valid sequential construction is then needed.

\subsection{Power calculation}
The failure event for a truly interior candidate is
\[
L_s(\alpha)<-\eps_s
\iff\widehat D_s-D_s<-(\gamma_s-r_s).
\]
When $\gamma_s>r_s$, the lower-tail Hoeffding bound gives \eqref{eq:power}. To make this probability at most $\beta/K$, it suffices that
\[
\gamma_s\geq\frac{W_s}{\sqrt{2n_s}}
\left(\sqrt{\log(1/\alpha)}+\sqrt{\log(K/\beta)}\right).
\]
Squaring and rearranging gives \eqref{eq:samplesize}; union-bounding the $K$ failure events yields joint acceptance probability at least $1-\beta$. For equal candidate allocation replace $\alpha$ inside the radius by $\alpha/J$. The result is sufficient, potentially conservative, and does not claim necessity or optimal sample complexity.

\subsection{When point-estimate acceptance can decay exponentially}
An illustrative Gaussian calculation can explain a many-slice power problem, but requires its assumptions to be stated correctly. Fix one candidate and assume independent errors
$\widehat D_s=D_s+\sigma_s Z_s$, $Z_s\sim N(0,1)$. Suppose at least $m$ slices have bounded standardized slack,
\begin{equation}
0\leq(D_s+\eps_s)/\sigma_s\leq B<\infty.
\end{equation}
Then
\begin{equation}
\Prb\{\text{all point-estimate constraints pass}\}
=\prod_s\Phi((D_s+\eps_s)/\sigma_s)\leq\Phi(B)^m.
\end{equation}
If $m\geq\rho K$, the bound decays exponentially in $K$. A lower bound on slack and an upper bound on variance would give the opposite inequality and cannot prove this conclusion. Correlated slices, increasing margins, or increasing sample sizes change the result. The calculation is for a fixed candidate and does not by itself establish that the coefficient selected by an arbitrary search converges to zero. In particular, an incumbent self-comparison has exactly zero paired error.

This phenomenon is a power issue, not a reason to switch to an upper-bound release screen. A valid non-inferiority procedure can also reject many truly feasible candidates when data are inadequate. The practical responses are more informative paired data, a defensible tolerance, a better candidate, or fallback.

\subsection{Feasible components and repair operators}
For scalar coefficients let $\Lambda_s=\{\lambda:D_s(\lambda)\geq-\eps_s\}$. If $\Lambda_s=[0,t_s]$ for each slice, their intersection equals $[0,\min_s t_s]$. Monotone utility on that set gives the endpoint solution. Merely containing an anchored interval is weaker. For example, with zero tolerance,
\begin{equation}
D_1(\lambda)=\lambda(\lambda-0.25)(\lambda-0.75),\qquad0\leq\lambda\leq1,
\end{equation}
the feasible set is $[0,0.25]\cup[0.75,1]$. Add a second slice $D_2(\lambda)=\lambda$ and choose a utility increasing sufficiently with $D_2$. Stopping at the first violation misses the later feasible region. A grid can reveal sampled disconnected components; it cannot rule out unseen narrow components without further regularity assumptions.

Similarly, uncontrolled repair can cycle even with convex constraints. With $C_1=(-\infty,1]$, $C_2=[-1,\infty)$, a rule that sends $2$ to $-2$ and $-2$ to $2$ always fixes the currently violated constraint while never entering their nonempty intersection. This shows that the repair specification is too weak, not that convexity is absent.

For completeness, in finite-dimensional Euclidean space suppose all $C_s$ are closed and convex, their intersection $F$ is nonempty, and a repair step is the metric projection onto the intersection of currently violated sets. Such a step is Fej\'er monotone relative to $F$:
\[
\|x_{t+1}-z\|^2\leq\|x_t-z\|^2-\|x_{t+1}-x_t\|^2\quad(z\in F).
\]
Thus the sequence is bounded and step lengths tend to zero. Every violated set has distance from $x_t$ at most that step length, while satisfied sets have distance zero. Any cluster point therefore lies in every closed $C_s$, and Fej\'er monotonicity implies convergence to that point. The successful operator uses additional geometry and projection control. We do not rely on either this property or failure of a repair baseline for our release guarantee. A repair pipeline can retain the same incumbent and use the same certification test.

\subsection{Aggregate guarantees and repeated releases}
With positive tolerances, a certified candidate can have a negative aggregate change. If aggregate non-regression is mandatory, include a separately defined aggregate difference as another endpoint and certify its lower bound as well. Appropriate mixture sampling or a valid combination of slice estimates is needed to define that endpoint; selecting the largest noisy aggregate alone does not certify improvement.

For repeated release attempts indexed by $t$, a predeclared allocation with $\sum_t\alpha_t\leq\alpha_{\mathrm{total}}$ yields a union-bound control if each attempt's conditional validity assumptions hold. Tolerances also accumulate: a sequence of individually tolerated decreases can move far below the original model. One may compare every release to a fixed reference, maintain a cumulative tolerance budget, or specify a sequential target. These are distinct policies and are not implemented by the one-step experiments here.

\section{Synthetic generators and complete implementation details}
\label{app:simulation}
\subsection{Gate calibration}
For each configuration, the script samples $C_s\sim\operatorname{Binomial}(n,(D_s+0.2)/0.4)$ and sets $\widehat D_s=0.4C_s/n-0.2$. The same $C_s$ are supplied to every gate. The no-detected-harm test accepts only if $\Prb_{0.45}(C\leq C_s)>\alpha/K$ for every slice; this is a lower-tail test for harm. Exact IUT instead requires $\Prb_{0.45}(C\geq C_s)\leq\alpha$ for every slice. Discrete ties are handled by these explicit inequalities. The Hoeffding width is 0.4.

For the dependence study, $n_{\mathrm{shared}}=\rho n$ positions use shared uniforms across slices, with $\rho\in\{0,0.5,1\}$. In the one-harmful setting the two thresholds are $p_{\mathrm{low}}=0.445$ and $p_{\mathrm{high}}=0.70$. Counts in the shared positions come from a multinomial distribution with probabilities
$(p_{\mathrm{low}},p_{\mathrm{high}}-p_{\mathrm{low}},1-p_{\mathrm{high}})$. The harmful slice uses the first cell, and each easy slice uses the first two. Remaining positions use independent binomials. In the all-safe setting all slices share the same threshold 0.525. Within each slice, all positions remain iid with the specified marginal; cross-slice dependence changes.

\begin{table}[ht]
\centering\small
\begin{tabular}{rrrrrr}
\toprule
$n$ & $K$ & Harm: IUT & Harm: Hoeffding & Safe: IUT & Safe: simultaneous \\
\midrule
128 & 1 & 3.02 & 0.45 & 48.17 & 48.17 \\
128 & 8 & 3.07 & 0.53 & 0.39 & 0.00 \\
128 & 32 & 3.04 & 0.53 & 0.00 & 0.00 \\
512 & 1 & 2.89 & 0.36 & 96.03 & 96.03 \\
512 & 8 & 2.63 & 0.33 & 69.36 & 15.71 \\
512 & 32 & 2.61 & 0.32 & 23.94 & 0.00 \\
2048 & 1 & 1.67 & 0.20 & 100.00 & 100.00 \\
2048 & 8 & 1.65 & 0.19 & 100.00 & 99.99 \\
2048 & 32 & 1.57 & 0.18 & 100.00 & 99.74 \\
\bottomrule
\end{tabular}

\caption{Completed sample-size study: release rates (\%) over \CalTrials\ trials per configuration. ``Harm'' is the one-harmful-slice scenario, so release is an error; ``Safe'' has every slice at mean 0.01, so release measures power. IUT uses exact binomial tests. The last column uses exact marginal tests at level $\alpha/K$.}
\end{table}

\begin{table}[ht]
\centering\small
\begin{tabular}{lrrrr}
\toprule
Shared fraction & Harm: no detection & Harm: IUT & Safe: IUT & Safe: simultaneous \\
\midrule
0.0 & 99.81 & 2.90 & 24.19 & 0.00 \\
0.5 & 99.74 & 2.79 & 62.72 & 8.52 \\
1.0 & 99.78 & 2.79 & 95.66 & 64.92 \\
\bottomrule
\end{tabular}

\caption{Release rates (\%) with shared latent observations, $K=32,n=512$. Marginal score distributions are unchanged. Strong dependence can substantially change joint acceptance without invalidating the IUT argument.}
\end{table}

\subsection{Adaptive-selection stress test}
Candidate $j$ has independent two-point scores with mean $-0.0205$ and $n=2048$; all tolerances are 0.02. Certification counts have dimensions trials $\times J$. Development counts are generated independently with the same distribution, and select the candidate with largest development mean, breaking ties by the first index. The fixed order is the numerical candidate index and is frozen before any counts are observed. The fixed-sequence implementation takes cumulative products of pass indicators, so a failure can never be followed by another valid certification. All raw development and certification counts are saved.

\subsection{Utility selection}
For each of eight slices, the two Rademacher coordinate means in \eqref{eq:dgp} are represented by independent $\operatorname{Binomial}(n,1/2)$ counts. The same per-slice observations determine all candidate scores within a split, preserving their strong candidate dependence. Independent streams generate the development and certification splits.

Let $r_n(\lambda)=0.4\|\lambda\|_1\sqrt{\log(1/\alpha)/(2n)}$. Development search solves the finite problem
\begin{equation}
\max_{\lambda\in\mathcal G}\widehat U_{\mathrm{dev}}(\lambda)
\quad\text{subject to}\quad
\widehat D_{s,\mathrm{dev}}(\lambda)-r_n(\lambda)
-\frac{0.12\|\lambda\|_2}{\sqrt n}\geq-0.01\quad\forall s.
\label{eq:devsearch}
\end{equation}
The last term is a fixed anticipatory design buffer, not a claimed optimal acquisition rule. Certification uses only $r_n$ and fresh observations. The incumbent is explicitly included. All families share the same trials, scores, tolerances, buffer, and gate; only the candidate grid changes. The 21-point scalar grid has coordinates $j/20$, the 441-point scalar grid has coordinates $j/440$, and the block grid is their $21\times21$ coarse Cartesian product. Search is exhaustive on these small grids; no neural layer localization or coordinate-ascent convergence is inferred.

\subsection{Pairing diagnostic}
To isolate variance, define independent Rademacher variables $U,V$ and scores
\[
h(\old,Z)=0.5+0.20U,\qquad
h(\theta,Z)=0.5+0.20U+0.01+0.02V.
\]
Both scores lie in $[0,1]$. A paired mean difference from $n=256$ observations has variance $0.02^2/n$. Independent incumbent and candidate examples instead give variance $(2\cdot0.20^2+0.02^2)/n$, a factor of 201. The executed simulation gives an empirical factor of \VarianceRatio. This diagnostic verifies the benefit under positive shared covariance, not universal superiority of pairing for every data-generating process.

\subsection{Saved files and statistical uncertainty}
The \texttt{base\_upgrade.py} script writes calibration, correlation, adaptive-selection, utility, and variance CSV files. Compressed NPZ files store the original sufficient statistics needed to recompute the gates. The utility trial file records each proposed and released coefficient pair, certification outcome, utility, and true violation indicator. \texttt{manifest.json} records software versions, the root seed, settings, and hashes of generated results; figure PDFs and PNGs are generated by the same script.

For a count $c$ of events among $R$ independent trials, the reported two-sided 95\% interval is
\[
\left[\operatorname{Beta}^{-1}(0.025;c,R-c+1),
\operatorname{Beta}^{-1}(0.975;c+1,R-c)\right],
\]
with endpoints 0 or 1 at the corresponding extremes. Utility intervals use mean $\pm1.96$ Monte Carlo standard errors. These intervals describe repeated simulation, not uncertainty in a model's population score within one trial. Random data uncertainty and Monte Carlo uncertainty are different layers.

\section{Public digits checkpoint study}
\label{app:digits}
\subsection{Frozen training and evaluation protocol}
The data contain 1,797 examples, 64 features, and ten digit classes. Features are divided by 16. Stratified splits allocate 718 examples to training, 359 to development, 360 to certification, and 360 to reporting. The five seeds are 1701--1705. A $64$--$48$--$10$ network uses a ReLU hidden layer, mini-batch size 128, learning rate 0.15, and momentum 0.9. Incumbent training runs 12 epochs, followed by 24 continuation epochs on classes 0--4. The update shares exact parameter ancestry with the incumbent.

The fixed development objective assigns each class 0--4 weight 0.16 and each class 5--9 weight 0.04. Each slice is a class and uses accuracy. Scalar interpolation evaluates 121 coefficients in $[0,1]$; layer interpolation evaluates an $11\times11$ grid, grouping each affine layer's weight and bias together. Both use the development constraint $\widehat D_s\geq-0.03$. The proposal policy then freezes a candidate using development results before applying a single all-class Hoeffding gate with $\alpha=0.05$, $\eps_s=0.03$, and paired accuracy differences in $[-1,1]$. Reporting results do not alter training, search, or the gate.

The dataset source and content hash are recorded; features are reloaded from the documented scikit-learn dataset. All split indices, trained checkpoints, model predictions, per-class counts, proposal coefficients, and pass/fail decisions are saved. A finite-difference check of the implemented network gradient gave maximum error below $4.1\times10^{-11}$. The simulation seed uncertainty is not reused as a formal population confidence interval.

\begin{table}[ht]
\centering\small
\begin{tabular}{lrr}
\toprule
Model & Reporting accuracy & Weighted reporting accuracy \\
\midrule
Incumbent & $0.9478\pm0.0077$ & $0.9479\pm0.0098$ \\
Continued checkpoint & $0.7300\pm0.0202$ & $0.8813\pm0.0113$ \\
Scalar proposal & $0.9494\pm0.0087$ & $0.9544\pm0.0123$ \\
Layer proposal & $0.9483\pm0.0089$ & $0.9546\pm0.0115$ \\
Actually released & $0.9478\pm0.0077$ & $0.9479\pm0.0098$ \\
\bottomrule
\end{tabular}
\caption{Mean $\pm$ descriptive sample standard deviation across five fixed split/training seeds. These are not five independent population confidence intervals. All five release attempts return the incumbent; the proposal rows are descriptive and are not deployed results.}
\label{tab:digits}
\end{table}

\begin{figure}[ht]
\centering
\includegraphics[width=\linewidth]{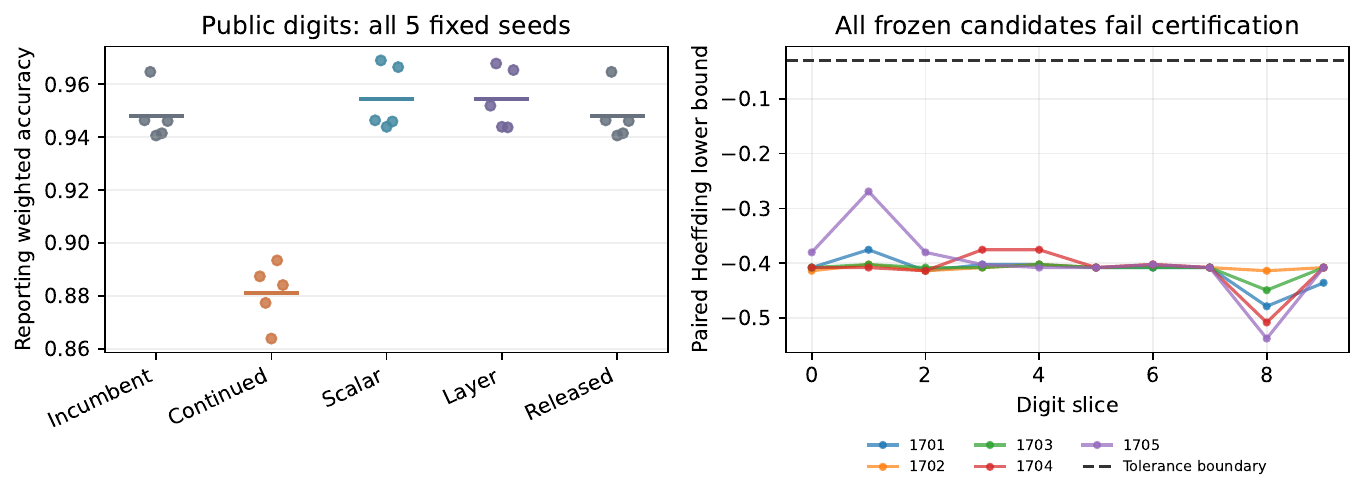}
\caption{Completed public-data integration study. The proposed merges improve substantially over the deliberately impaired continuation, but the release gate does not certify the frozen proposal in any run. Small per-class certification samples make fallback an expected and informative outcome.}
\end{figure}

\subsection{Why fallback is an informative result}
For a class with $n_s=36$, the generic paired Hoeffding radius is
$\sqrt{2\log(20)/36}\approx0.408$, far exceeding the 0.03 tolerance. The required empirical improvement is approximately $0.378$; for an incumbent class accuracy above 0.9, even a perfect candidate cannot achieve that improvement on the same examples. A candidate that agrees almost perfectly with a strong incumbent may have a near-zero observed difference yet fail this gate. For a hypothetical true difference of zero, merely making the radius smaller than 0.03 would require approximately $2\log(20)/0.03^2\approx6658$ iid examples per slice. This is a diagnostic calculation, not a sufficient condition for high joint power or a recommended fixed sample size.

These bounds can be conservative, and specialized paired classification tests or valid empirical-variance bounds may help. We did not revise the gate after observing these failures. This test primarily checks the data flow, common checkpoint ancestry, matched search counts, and exact fallback; the synthetic study is where population release probabilities are known and repeatedly assessed. The public digits sample is also not a translation model, a modern base-model upgrade, or evidence of layer-specific capability localization.

\subsection{Subsequent balanced continuation: a descriptive extension}
The selective continuation above reduces aggregate reporting accuracy, which limits its resemblance to a useful base-model upgrade. After reviewing that result, we specified a second protocol before executing it: keep the same seeds, splits, incumbent training, optimizer, objective weights, tolerance, search counts, tie rules, and gate, but continue on all 718 training examples. The implementation retrains each incumbent deterministically and checks its tensors and split indices against the original archive; all five reproduce exactly. The continuation optimizer starts with zero momentum buffers in both paths. Balanced continuation takes 144 mini-batch updates over 24 epochs, versus 72 in the selective path. The two paths are therefore not a compute-matched intervention on class coverage.

Table~\ref{tab:digits_paths} reports every policy from both paths. Balanced continuation improves mean overall and weighted reporting accuracy, yet both merge proposals have lower mean accuracy than the fully continued checkpoint. The layer proposal also has lower mean accuracy than the scalar proposal. These outcomes do not support layer superiority in this example. Mean accuracy improvement does not imply every seed or every class improves: Table~\ref{tab:digits_path_seeds} retains each seed and the chosen proposal's worst observed class difference. These are finite-reporting-set differences, not population certificates.

\begin{table}[ht]
\centering\footnotesize
\setlength{\tabcolsep}{3pt}
\begin{tabular}{llrrr}
\toprule
Path & Model & Accuracy & Weighted accuracy & Worst slice $\Delta$ \\
\midrule
Selective & Incumbent & $0.9478\pm0.0077$ & $0.9479\pm0.0098$ & $0.0000\pm0.0000$ \\
 & Continued checkpoint & $0.7300\pm0.0202$ & $0.8813\pm0.0113$ & $-0.9271\pm0.0464$ \\
 & Scalar proposal & $0.9494\pm0.0087$ & $0.9544\pm0.0123$ & $-0.0286\pm0.0495$ \\
 & Layer proposal & $0.9483\pm0.0089$ & $0.9546\pm0.0115$ & $-0.0456\pm0.0432$ \\
 & Released & $0.9478\pm0.0077$ & $0.9479\pm0.0098$ & $0.0000\pm0.0000$ \\
\midrule
Balanced & Incumbent & $0.9478\pm0.0077$ & $0.9479\pm0.0098$ & $0.0000\pm0.0000$ \\
 & Continued checkpoint & $0.9594\pm0.0107$ & $0.9637\pm0.0105$ & $-0.0114\pm0.0156$ \\
 & Scalar proposal & $0.9567\pm0.0109$ & $0.9587\pm0.0118$ & $-0.0113\pm0.0154$ \\
 & Layer proposal & $0.9544\pm0.0116$ & $0.9558\pm0.0122$ & $-0.0113\pm0.0154$ \\
 & Released & $0.9478\pm0.0077$ & $0.9479\pm0.0098$ & $0.0000\pm0.0000$ \\
\bottomrule
\end{tabular}

\caption{Both completed digits paths, mean $\pm$ descriptive seed SD. ``Worst slice $\Delta$'' is the per-seed minimum reporting class-accuracy difference from its incumbent, summarized over seeds. All frozen proposals fail; every released model is the incumbent. Balanced continuation was specified after the selective study and reuses its evaluation partitions.}
\label{tab:digits_paths}
\end{table}

\begin{table}[ht]
\centering\footnotesize
\setlength{\tabcolsep}{3pt}
\begin{tabular}{lrrrrlrr}
\toprule
Path/seed & Old & Updated & Scalar & Layer & Chosen $(\lambda_1,\lambda_2)$ & Min.\ $\Delta$ & Gate \\
\midrule
S/1701 & 0.9500 & 0.7361 & 0.9500 & 0.9556 & L $(0.100,0.000)$ & 0.0000 & Fail \\
S/1702 & 0.9528 & 0.7139 & 0.9556 & 0.9528 & S $(0.025,0.025)$ & 0.0000 & Fail \\
S/1703 & 0.9361 & 0.7111 & 0.9361 & 0.9361 & S $(0.067,0.067)$ & -0.0286 & Fail \\
S/1704 & 0.9444 & 0.7611 & 0.9472 & 0.9417 & L $(0.100,0.000)$ & -0.0556 & Fail \\
S/1705 & 0.9556 & 0.7278 & 0.9583 & 0.9556 & S $(0.117,0.117)$ & -0.1143 & Fail \\
\midrule
B/1701 & 0.9500 & 0.9611 & 0.9583 & 0.9472 & L $(0.000,1.000)$ & -0.0286 & Fail \\
B/1702 & 0.9528 & 0.9667 & 0.9667 & 0.9667 & S $(0.917,0.917)$ & 0.0000 & Fail \\
B/1703 & 0.9361 & 0.9472 & 0.9417 & 0.9417 & S $(0.592,0.592)$ & 0.0000 & Fail \\
B/1704 & 0.9444 & 0.9500 & 0.9500 & 0.9500 & L $(0.400,0.800)$ & -0.0278 & Fail \\
B/1705 & 0.9556 & 0.9722 & 0.9667 & 0.9667 & L $(0.900,0.400)$ & 0.0000 & Fail \\
\bottomrule
\end{tabular}

\caption{All ten seed/path outcomes. S and B denote selective and balanced continuation; the four accuracy columns use the reporting split. The chosen proposal is fixed using development observations; S/L in its column denotes scalar/layer interpolation. Coefficients are displayed to three decimals and saved at full precision. ``Min.\ $\Delta$'' is the chosen proposal's worst reporting class difference. Gate outcomes use the separate within-run certification split; no failed result was omitted.}
\label{tab:digits_path_seeds}
\end{table}

The balanced protocol and source/helper hashes were saved before its training began; original split indices, checkpoints, all development candidates, predictions, losses, and gate values are archived in \texttt{results/digits\_balanced/}. A separate verification script reconstructs reporting metrics and certification lower bounds from saved predictions for all ten seed/path runs. It also regenerates these tables. The original selective results are retained without modification.

This extension is subsequent and descriptive. Its certification and reporting partitions overlap the already examined selective study exactly within each seed, and the five seeds reuse one dataset. We do not treat the new outcomes as fresh confirmatory release certification, combine the two studies to claim a global error probability, or interpret seed variability as independent population uncertainty. The gate itself was executed unchanged, and no outcome triggered tuning. A future confirmatory deployment would require independent new data or a valid sequential design. Every balanced proposal also fails the conservative gate, so the additional path establishes neither a successfully certified public-model improvement nor a useful empirical merging advantage.

\section{A concrete next-stage foundation-model protocol}
This appendix specifies prospective work and contains no completed foundation-model result. First, choose at least two public, same-ancestry checkpoint transitions and verify their tensor mappings, token indices, licenses, and permitted redistribution. A version name or shared architecture is not sufficient. Include every prespecified pair in reporting; any path-quality exclusion must use development data and be disclosed.

Second, choose fixed language/domain slices and a bounded, interpretable per-example primary score. If corpus BLEU or a learned metric is also reported, use the appropriate paired document/source-cluster resampling and separate approximate inference from the finite-sample primary certificate. Multilingual translations of the same source should remain in the same split and resampling cluster. Set tolerances using application meaning, not the observed uncertainty of the candidate one wishes to release.

Third, compare incumbent, full update, scalar merge, block merge, a Pareto-front generator, and a repair/replay candidate under declared evaluation and compute budgets. Preserve the same incumbent and certification interface for every method. Development may tune methods within the budget; the certification split cannot serve as an iterative optimization oracle. If several methods can produce the finally released model, either select the method on development data or apply candidate-level multiplicity control.

Finally, record generation tokens, wall-clock evaluation cost, memory, training cost, split identifiers, seeds, candidate counts, gate outcomes, and independent reporting scores. Publish failed gates, negative results, and incompatible checkpoint pairs. A useful empirical claim would require accepted non-incumbent utility gains over matched baselines across these transitions, not only an attractive development frontier or an inability to detect harm.
\end{document}